\def\E{\mathbb{E}}
\DeclareMathOperator*{\argmin}{argmin}
\newtheorem{theorem}{Theorem}
\newtheorem{definition}{Definition} 
\newtheorem{assumption}{Assumption} 
\newtheorem{example}{Example}
\newtheorem{proposition}{Proposition}
\newtheorem{remark}{Remark}
\title{\Large{Medoids in almost linear time via multi-armed bandits}}
\author{Vivek Bagaria$^{*}$
\\ Stanford University
\\ vbagaria@stanford.edu
\And Govinda M. Kamath$^*$ 
\\ Stanford University
\\ gkamath@stanford.edu
\And Vasilis Ntranos$^*$ 
  \\ UC, Berkeley
  \\ ntranos@berkeley.edu
\And Martin J. Zhang\thanks{Contributed
 equally and listed alphabetically.}
 \\ Stanford University
 \\ jinye@stanford.edu
  \And David Tse 
  \\ Stanford University
  \\ dntse@stanford.edu}
\begin{document}

\maketitle

\begin{abstract}
Computing the medoid of a large number of points in high-dimensional space is an increasingly common operation in many data science problems. We present an algorithm \texttt{Med-dit} which uses $O(n\log n)$ distance evaluations to compute the medoid with high probability. \texttt{Med-dit} is based on a connection with the multi-armed bandit problem. We evaluate the performance of \texttt{Med-dit} empirically on the Netflix-prize and the single-cell RNA-Seq datasets, containing hundreds of thousands of points living in tens of thousands of dimensions, and observe a $5$-$10$x improvement in performance over the current state of the art. \texttt{Med-dit} is available at \url{https://github.com/bagavi/Meddit}.
\end{abstract}

\vspace{-0.15 in}

\section{INTRODUCTION}

An important building block in many modern data analysis problems such as clustering is the efficient computation of a representative point for a large set of points in high dimension. A commonly used representative point is the {\em centroid} of the set, the point which has the smallest average distance from 
 all other points in the set. For example, k-means clustering \citep{Ste56,macqueen1967some,Llo82} computes the centroid of each cluster under the squared Euclidean distance. In this case, the centroid is the arithmetic mean of the points in the cluster, 
which can be computed very efficiently. Efficient computation of  centroid is central to the success of k-means, as this computation has to be repeated many times in the clustering process.
 
 While commonly used, centroid suffers from several drawbacks. First, the centroid is in general {\em not} a point
in the dataset and thus may not be
interpretable in many applications. This is especially true when
the data is structured like in images, or 
when it is sparse like in recommendation
systems \citep{LesRajUll}. Second, the centroid is sensitive to outliers: several far away points will significantly affect the location of the centroid. 
Third, while the centroid can be efficiently computed under squared Euclidean distance, there are many applications
where using this distance measure is not suitable. Some examples would be 
applications where the data
is categorical like  
medical records \citep{UKbiobank};
or situations
where the data points 
have different
support sizes such as in
recommendation
systems \citep{LesRajUll};
or cases where the 
data points are on
a high-dimensional probability
simplex like in single cell
RNA-Seq analysis 
\citep{NtrKamZhaPacTse}; or
cases where the data lives in a 
space with no well known Euclidean
space like while clustering on 
graphs from social networks.    

An alternative to the centroid is the {\em medoid}; this is the point  {\em in the set} that minimizes the average distance to all the other points. It is used for example in  $k$-medoids clustering \citep{kaufman1987clustering}. 
On the real line, the medoid is the median of the set of points. The medoid overcomes the first two drawbacks of the centroid: the medoid is by definition one of the points in the dataset, and it is less sensitive to outliers than the centroid. 
In addition, centroid algorithms are usually specific to the distance used to define the centroid. 
On the other hand, medoid algorithms usually work for arbitrary distances.

The naive method to compute the medoid
would require computing all pairwise
distances between points in the 
set.
For a set with  $n$ points, 
this would 
require the computation of $\binom{n}{2}$ distances,
which would be computationally prohibitive
when there are hundreds of thousands of points
and each point lives
in a space with dimensions in tens
of thousands.

In the one-dimensional case, the medoid problem 
reduces to the problem of finding the median, 
which can be solved in linear 
time through \texttt{Quick-select} \citep{Hoa61}. 
However, in higher dimensions,
no linear-time algorithm is known.
 \texttt{RAND} \citep{EppWan} 
is an algorithm that estimates the average distance of each point to all the other points by sampling a random subset of other points. It takes a total of  
$O\left(\frac{n \log n}{\epsilon}\right)$ distance 
computations
to approximate the medoid within a 
factor of $(1+\epsilon\Delta)$ with high probability,
where $\Delta$ is the maximum
distance between two points
in the dataset. 
We remark that this is an approximation algorithm, and moreover
$\Delta$ may \textit{not} be known apriori.
\texttt{RAND} was leveraged by 
\texttt{TOPRANK} \citep{OkaCheLi} which 
uses the estimates obtained by \texttt{RAND} to focus on a small subset of candidate points, evaluates the average distance of these points {\em exactly}, and picks the minimum of those. \texttt{TOPRANK} needs  
$O(n^{\frac{5}{3}} \log^{\frac{4}{3}} n)$ distance computations 
to find the {\em exact} medoid with high probability 
under a distributional assumption 
on the average distances.
\texttt{trimed} \citep{NewFle} 
presents a clever algorithm 
to find the medoid with 
$O(n^{\frac{3}{2}} 2^{\Theta(d)})$
distance evaluations,
which works very well
when 
the points  live in
a space of dimensions less
than $20$
(i.e., $2^d \ll \sqrt{n}$) 
under a distributional 
assumption on the points
(in particular on the distribution of points around the medoid).
However, the 
exponential dependence
on dimension 
makes it impractical
when $d \ge 50$,
which is the case
we consider here. 
Note that their result requires the distance measure to satisfy the triangle inequality.


\begin{figure}[t]
\centering
\includegraphics[width=0.6\textwidth]{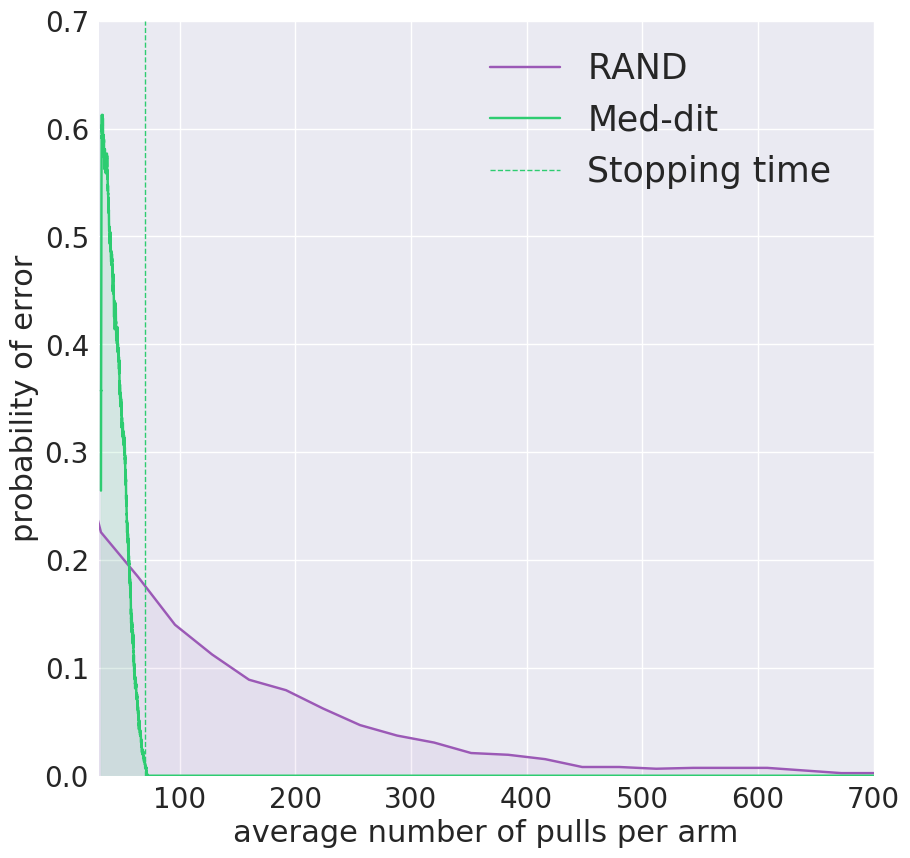}
\caption{\label{fig:performance_RNA_100k} We
 plot the probability that \texttt{RAND}
and \texttt{Med-dit} do not return 
the true medoid as a function 
of the number of distance 
evaluations per point over
$1000$ monte-carlo trials. 
We note that \texttt{Med-dit}
needs much fewer distance
evaluations than  \texttt{RAND}
for comparable performance.
The
computation of the true medoid here 
is also computationally prohibitive 
and is discussed in Section \ref{sec:empi}.
} 
\end{figure}

In this paper, 
we present \texttt{Med(oid)-(Ban)dit}, 
a sampling-based algorithm 
that finds the 
\textit{exact} medoid with high
probability.
It takes $O(n \log n)$ distance 
evaluations under natural assumptions on the distances, which are justified by 
the characteristics observed in real data (Section \ref{sec:prob}). In contrast to \texttt{trimed}, the number of distance computations is {\em independent} of the dimension $d$;  moreover, there is no specific requirement for the distance measure, e.g. the triangle inequality or the symmetry. Thus, \texttt{Med-dit} is particularly suitable for high-dimensional data and general distance measures. 

In
Figure \ref{fig:performance_RNA_100k}, 
we 
showcase the performance of \texttt{RAND}
and \texttt{Med-dit} 
on 
the largest cluster of
the single cell RNA-Seq gene expression
dataset of \citep{10xdata}.
This consists of $27,998$
gene-expressions 
of each of $109,140$ cells, $i.e$
$109,140$ points in $27,998$ dimensions.
We note that 
\texttt{Med-dit}
evaluates around
$10$ times fewer 
distances than 
\texttt{RAND}
to achieve similar performance.
At this scale running 
\texttt{TOPRANK} and
\texttt{trimed}  are
computationally prohibitive.

The main idea behind 
\texttt{Med-dit} is noting that the problem 
of computing the medoid can be posed
as that of computing the best arm
in a multi-armed bandit (MAB) setting 
\citep{even2002pac, jamieson2014best, LaiRob}. 
One views each point in the medoid problem
as an arm whose unknown parameter is its average distance to all the other points. Pulling an arm corresponds 
to evaluating the distance of that point to a
randomly chosen point, which provides an estimate of the arm's unknown parameter. 
We leverage the
extensive literature on the multi-armed bandit problem
to propose \texttt{Med-dit}, which is 
a variant of the Upper Confidence Bound (UCB)
Algorithm \citep{LaiRob}.

Like the other sampling based algorithms  \texttt{RAND} and \texttt{TOPRANK},
\texttt{Med-dit} also aims to estimate
the average distance of every point to the other points by
evaluating its distance to a
random subset of points. However, unlike these algorithms where each point receives a fixed amount of sampling decided {\em apriori}, the sampling in \texttt{Med-dit} is done {\em adaptively}. 
More specifically,
\texttt{Med-dit} maintains 
confidence intervals
for the average distances of all points
and adaptively 
evaluates distances  of only
those points which could 
\textit{potentially} be the medoid.
Loosely speaking, 
points whose lower confidence
bound on the average distance are small
form the set of points that 
could \textit{potentially} be the medoid.
As the algorithm proceeds,
additional 
distance evaluations narrow the 
confidence intervals.
This rapidly
shrinks the set of 
points that
could \textit{potentially} be the medoid,
enabling the algorithm to 
declare a medoid 
while
evaluating
a few distances.

In Section \ref{sec:prob} we delve more into
the problem formulation and assumptions.
In Section \ref{sec:alg}
we present \texttt{Med-dit} and 
analyze
its performance. We extensively 
validate the  
performance of \texttt{Med-dit}
empirically
on two large-scale datasets: a 
single cell RNA-Seq gene expression dataset 
of \citep{10xdata}, and the Netflix-prize
dataset of \citep{netflixprize} in Section \ref{sec:empi}.

\vspace{-0.1 in}

\section{PROBLEM FORMULATION}\label{sec:prob}
Consider $n$ points $x_{1},x_{2},\cdots,x_{n}$ lying in
 some space $\mathcal{U}$ equipped with the distance 
 function
  $d: \mathcal{U}\times \mathcal{U} \mapsto \mathbb{R}_+$. 
Note that we do not assume the triangle inequality or the symmetry for the distance function $d$. 
Therefore the analysis here encompasses directed graphs, or distances
like Bregman divergence
and squared Euclidean distance.
We use the standard notation of $[n]$ 
to refer to the set $\{1,2,\cdots,n\}$.
Let $d_{i,j}\triangleq d(x_i,x_j)$ and let the average distance of a point $x_i$ be 
\begin{align*}
\mu_i \triangleq \frac{1}{n-1}\sum_{j \in [n] - \{ i \}} d_{i,j}.
\end{align*}
The medoid problem 
can be defined as follows.
\begin{definition} (The medoid problem)
For a set of points $\mathcal{X} = \{x_{1},x_{2},\cdots,x_{n} \}$, the medoid is the  
point in $\mathcal{X}$ that has the smallest average distance to other points.
Let $x_{i^*}$ be the medoid. 
The index $i^*$ and the average distance $\mu^*$ of the emdoid are given by 
\begin{align*}
i^{*} \triangleq \argmin_{i\in[n]} \mu_i,~~~~~~~~
\mu^* \triangleq \min_{i\in[n]} \mu_i.
\end{align*}
The problem of finding the medoid of $\mathcal{X}$ is called its
medoid problem.
\end{definition} 
 
In the worst case
over distances
and points,
we would need to 
evaluate $O(n^2)$ distances to compute 
the medoid. Consider the following example.

\begin{example} \label{exp:worst_case} 
Consider a setting where there are $n$ points.
We pick a point $i$ uniformly at random from $[n]$
and set all distances to point $i$ to $0$. 
For every other point $j \in [n] - \{i\}$, we pick a point 
$k$ uniformly at random from $  [n] - \{i, j\}$ and
set $d_{j,k} = d_{k,j} = n$. 
As a result, each point apart from $i$ has a distance of $n$
to at least one point. 
Thus picking the point with distance
$0$ would require at least $O(n^2)$
distance evaluations.
We note that this distance does not satisfy the
triangle inequality, and thus is not a metric.
\end{example}

The issue with Example \ref{exp:worst_case} is that, 
the empirical distributions 
of distances of any point (apart from the true medoid) 
have heavy tails. 
However, such worse-case instances rarely arise in practice. 
Consider the following real-world example.

\begin{example} \label{exp:real_world_data}
We consider the $\ell_1$
distance between each pair 
of points in a cluster -- $109,140$ points in $27,998$
 dimensional probability simplex
 -- taken from a single cell RNA-Seq expression 
dataset 
from  \citep{10xdata}. 
Empirically the $\ell_1$ distance
for a given point follows a 
Gaussian-like distribution 
with a small variance,
as shown 
in  Figure \ref{fig:l1_dist_100k}. 
Modelling  distances among
high-dimensional points
by a Gaussian distribution is
also discussed in \citep{LinSte}. 
Clearly the distances 
do not follow a heavy-tailed
distribution like Example \ref{exp:worst_case}.
\end{example}

\begin{figure}[t]
\centering
\includegraphics[width=\linewidth]{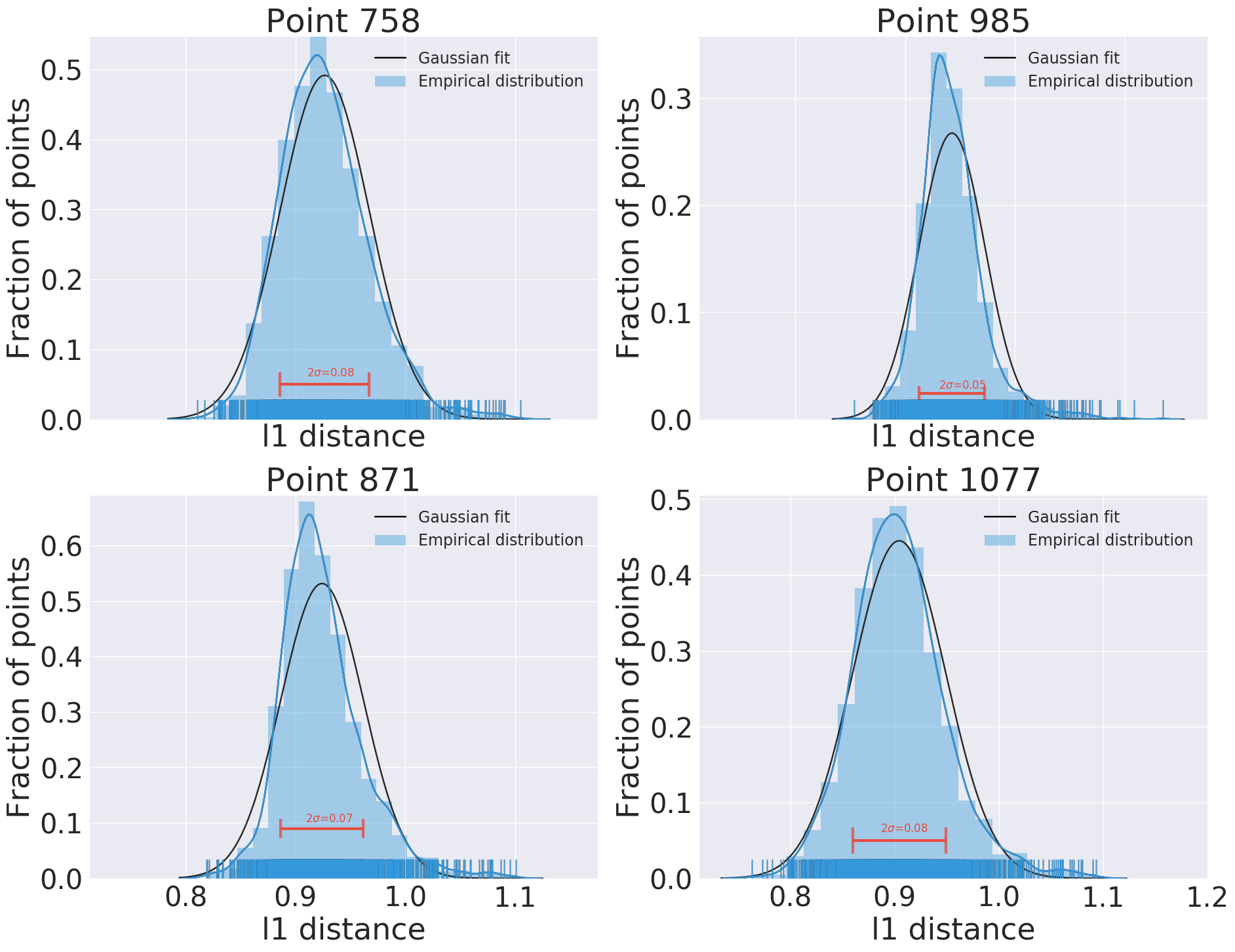}
\caption{\label{fig:l1_dist_100k} Histogram of  
$\ell_1$ distance of $4$ randomly chosen points
 (from $20,000$ points). We note that the standard deviations are around $0.05$.} 
\end{figure}

\subsection{The Distance Sampling Model}
Example \ref{exp:real_world_data} suggests that for any point $x_i$, we can estimate its average distance $\mu_i$ by the empirical mean of its distance from a subset of randomly sampled points. 
Let us formally state this. 
For any point $x_i$, let $\mathcal{D}_i=\{d_{i,j},j\neq i\}$ be the set of distances associated with $x_i$.
Define the random distance samples $D_{i,(1)},D_{i,(2)},\cdots,D_{i,(k)}$ to be i.i.d. \textit{sampled with replacement} from $\mathcal{D}_i$.
We note that 
\begin{align*}
\E[D_{i,(j)}] = \mu_i,~\forall i\in [n],j\in [k]
\end{align*}
Hence the average distance $\mu_i$ can be estimated as 
\begin{align*}
\hat{\mu}_i = \frac{1}{k} \sum_{i=1}^k D_{i,(k)}.
\end{align*} 
To capture the concentration property illustrated in Example \ref{exp:real_world_data}, we assume the random variables $D_{i,(j)}$, i.e. sampling with replacement from $\mathcal{D}_i$, are $\sigma$-sub-Gaussian
\footnote{$X$ is a $\sigma$-sub-Gaussian if  $P(X > t) \le 2 e^{-\frac{t^2}{\sigma^2}}$.  }
 for all $i$.

\subsection{Connection with Best-Arm Problem }
For points whose average distance is close to that of the medoid, we need an accurate 
estimate of their
average distance, and for points  whose average distance 
is far away, coarse estimates suffice. 
Thus the problem reduces 
to  adaptively choosing the number of distance evaluations for 
each
point  to ensure that 
on one hand, we have a good enough estimate
of the average distance to compute the true medoid, while
on the other hand,
we minimise the total number of distance
evaluations.

This problem has been addressed 
as the best-arm identification problem
in the multi-armed bandit (MAB) 
literature (see the  review 
article of \citep{jamieson2014best} 
for instance).
In a typical setting, we have $n$ arms.
At the time $t=0,1,\cdots$, we decide 
to pull an arm $A_t \in [n]$, and receive 
a reward $R_t$ with $\E[R_t]=\mu_{A_t}$.
The goal is to identify the arm with the
largest expected reward with high 
probability while
pulling as few arms as possible.

The medoid problem can be formulated as a best-arm problem as follows:
let each point in the ensemble be an arm and let the average distance $\mu_i$ be the
loss of the $i$-th arm.
Medoid corresponds to the arm with the
 \textit{smallest} loss, i.e the best arm.
At each iteration $t$, pulling arm $i$ is equivalent to sampling (with replacement) from $\mathcal{D}_i$ with expected value $\mu_i$. 
As stated previosuly, the goal is to find the best arm (medoid) with as few pulls (distance computations) as possible.

\vspace{-0.1 in}

\section{ALGORITHM}\label{sec:alg}

We have $n$ points $x_1,\cdots,x_n$.
Recall that the average 
distance for arm $i$ is  
$\mu_i= \frac{1}{n-1} \sum_{j\neq i} d_{i,j}$.
At iteration $t$ of the algorithm, we 
evaluate a distance $D_t$ to point
 $A_t \in [n]$. 
Let $T_i(t)$ be the number of 
distances of 
point $i$ evaluated 
upto time $t$. 
We use a variant of the 
Upper Confidence Bound (UCB) 
\citep{LaiRob} algorithm 
to sample distance of a point 
$i$ with another point chosen
uniformly at random.

We compute the 
empirical mean and use it 
as an estimate
of  $\mu_i$ at time $t$,
in the initial stages of the algorithm. 
More concretely when $T_i(t) < n$,
\begin{align*}
\hat{\mu}_i(t) \triangleq \frac{1}{T_i(t)} \sum_{\substack{1  \le \tau \leq t, A_\tau=i}} D_\tau.
\end{align*} 


Next we recall
from 
Section \ref{sec:prob}
that the sampled distances 
are independent and
$\sigma$-sub-Gaussian (as points 
are sampled
with replacement).
Thus 
for all $i$, for all $\delta\in (0,1)$, 
with probability at least
$1-\delta$, $\mu_i \in [\hat{\mu}_i(t) - C_i(t), \hat{\mu}_i(t) + C_i (t)]$, 
where 
\begin{align} \label{eq:CI}
C_i(t) = \sqrt{\frac{2 \sigma^2 \log \frac{2}{\delta}}{T_i(t)}}.
\end{align}
When $T_i(t) \geq n$, we compute the exact average distance of point $i$ and set the confidence interval to zero.
Algorithm \ref{alg:UCB-Medoid}
describes the \texttt{Med-dit} algorithm. 

\begin{algorithm}[t]
   \caption{\texttt{Med-dit} \label{alg:UCB-Medoid}}
   \label{alg:cpca1}
\begin{algorithmic}[1]
   \STATE Evaluate 
   distances 
   of each point to a randomly 
   chosen point and build a $(1-\delta)$-confidence interval
   for the mean distance 
   of each point $i$:
   $[\hat{\mu}_i(1)-C_i(1), \ \hat{\mu}_i(1)+C_i(1)]$.    
  \WHILE \TRUE
   \STATE At iteration $t$, pick point $A_t$ that 
   minimises $\hat{\mu}_i(t-1)-C_i(t-1)$.    
    \IF{distances of point $A_t$ are 
   evaluated less than $n-1$ times}
    \STATE evaluate the distance
   of $A_t$ to a randomly picked point; update the confidence interval of $A_t$.
    \ELSE
    \STATE Set 
   $\hat{\mu}_i(t)$ to be the empirical mean 
   of distances of point $A_t$ by 
   computing its distance to all $(n-1)$ other
   points and set 
   $C_{A_t}(t)=0$.
  \ENDIF
   \IF {there exists a point $x_{i^*}$ 
   such that $\forall  i\neq i^*$, 
   $\hat{\mu}_{i^*}(t) + C_{i^*}(t) < \hat{\mu}_i(t) - C_i(t)$}
    \RETURN $x_{i^*}$.
  \ENDIF  
  \ENDWHILE
\end{algorithmic}
\end{algorithm}
\begin{theorem}\label{thrm:ub}
For $i\in[n]$, let $\Delta_i = \mu_i - \mu^*$. 
If we pick $\delta = \frac{2}{n^3}$ in  Algorithm \ref{alg:UCB-Medoid},
then 
with probability  $1-o(1)$, 
it returns the true medoid with the number of 
distance evaluations $M$ such that,
\begin{align}\label{eq:ub}
M \leq  \sum_{i \in [n]} \left( \frac{24 \sigma^2}{\Delta_i^2} \log n  \wedge 2n \right).
\end{align}
\end{theorem}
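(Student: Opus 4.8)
The plan is the classical ``good event + confidence-interval elimination'' argument for UCB-type best-arm identification, using the one feature special to \texttt{Med-dit}: no arm is sampled more than $n-1$ times before its mean is computed exactly and its interval collapses to $0$. With $\delta=2/n^3$, the width \eqref{eq:CI} becomes $C_i(t)=\sqrt{6\sigma^2\log n/T_i(t)}$. By the sub-Gaussian tail bound recalled just before \eqref{eq:CI}, for each fixed arm $i$ and each fixed sample count $k\in\{1,\dots,n-1\}$ the probability that $\mu_i\notin[\hat\mu_i-C_i,\ \hat\mu_i+C_i]$ is at most $\delta$, and for $k\ge n$ the empirical mean is exact so the interval is trivially valid. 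I would let $E$ be the event that $\mu_i\in[\hat\mu_i(t)-C_i(t),\ \hat\mu_i(t)+C_i(t)]$ for every $i$ and every $t$, and take a union bound over the at most $n(n-1)$ relevant $(i,k)$ pairs, giving $\P(E^c)\le n(n-1)\delta\le 2/n=o(1)$. All remaining claims are argued on $E$.

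\textbf{Step 2: correctness.} If the algorithm returns $x_{i^*}$ at time $t$, then $\hat\mu_{i^*}(t)+C_{i^*}(t)<\hat\mu_i(t)-C_i(t)$ for all $i\neq i^*$; on $E$ the left side is $\ge\mu_{i^*}$ and the right side is $\le\mu_i$, so $\mu_{i^*}<\mu_i$ for all $i\neq i^*$ and $x_{i^*}$ is indeed the medoid. Termination follows from Step~3: each arm is pulled only finitely often, and once every arm has been either sufficiently sampled or ``exactified'' (its mean computed exactly, $C_i=0$) the stopping test is satisfied.

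\textbf{Step 3: sample complexity (the crux).} Fix a suboptimal arm $i$ (so $\Delta_i>0$) and suppose it is pulled at iteration $t$. By the selection rule it has the minimal lower confidence bound, so $\hat\mu_i(t-1)-C_i(t-1)\le\hat\mu_{i^*}(t-1)-C_{i^*}(t-1)$; on $E$ the right-hand side is $\le\mu^*$ while the left-hand side is $\ge\mu_i-2C_i(t-1)$, hence $2C_i(t-1)\ge\mu_i-\mu^*=\Delta_i$, i.e. $\sqrt{6\sigma^2\log n/T_i(t-1)}\ge\Delta_i/2$, i.e. $T_i(t-1)\le 24\sigma^2\log n/\Delta_i^2$. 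Thus arm $i$ is sampled at most $O(\sigma^2\log n/\Delta_i^2)$ times in the confidence-interval phase, and at most $n-1$ times before the exact-computation branch fires (costing at most another $n-1$ evaluations, after which, on $E$, a suboptimal arm is never pulled again since its lower bound then equals $\mu_i>\mu^*$). Charging each arm its own evaluations, and noting the optimal arm contributes at most $2n$, summation over $i\in[n]$ yields \eqref{eq:ub}.

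\textbf{Main obstacle.} The concentration union bound and the ``pulled $\Rightarrow C_i(t-1)\ge\Delta_i/2$'' inequality are routine. The delicate part is the bookkeeping: one must reconcile the adaptive bound $24\sigma^2\log n/\Delta_i^2$ with the hard cap of $n-1$ samples and the additional $n-1$ evaluations of the exact-computation fallback so that each arm's contribution is genuinely $(24\sigma^2\log n/\Delta_i^2)\wedge 2n$ and not larger by a constant factor. This is precisely where the exact rule of Algorithm~\ref{alg:UCB-Medoid} — always pulling the minimal-LCB arm, and freezing an arm once its mean is known exactly — must be used.
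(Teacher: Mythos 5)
Your proposal is correct and follows essentially the same argument as the paper: a union bound over the at most $n^2$ confidence-interval computations (the paper phrases the complement via the events $\mathcal{E}_1,\mathcal{E}_2,\mathcal{E}_3$), then the observation that on the good event a suboptimal arm can only be pulled while $2C_i \ge \Delta_i$, i.e. while $T_i \le 24\sigma^2\log n/\Delta_i^2$, with the exact-computation branch capping any arm at $2(n-1)\le 2n$ evaluations and freezing it thereafter. The bookkeeping you flag is handled in the paper exactly as you sketch it, so no changes are needed.
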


If $\frac{\Delta_i}{\sigma}$ 
is $\Theta(1)$, 
then
Theorem \ref{thrm:ub} gives that
\texttt{Med-dit} takes $O(n \log n)$ 
distance evaluations. 
In addition, if one assumed a Gaussian
prior 
on the average distance of points 
(as one would expect from Figure \ref{fig:distance_distribution}),
then \texttt{Med-dit} would also take $O(n \log n)$
distance evaluations in expectation over
the prior. See Appendix 
\ref{sec:Gau_prior}. 

If we use the same assumption as \texttt{TOP-RANK},
$i.e.$ the $\mu_i$ are i.i.d. Uniform $[c_0, c_1]$, then the
number of distance evaluations 
needed by \texttt{Med-dit} is 
$O(n^{\frac{3}{2}} \log^{\frac{1}{2}} n)$,
compared to $O(n^{\frac{5}{3}}\log ^\frac{4}{3} n)$ for \texttt{TOP-RANK}. 
Under these assumptions \texttt{RAND} takes $O(n^2)$
distance evaluations to return 
the true medoid.

We assume
that $\sigma$ is known in the proof below, 
whereas
we estimate them in the empirical evaluations. 
See Section \ref{sec:empi} for details.

\begin{remark}
With a small modification to \texttt{Med-dit}, the sub-Gaussian assumption in Section \ref{sec:prob} can be further relaxed to a finite variance assumption \citep{bubeck2013bandits}. 
One the other hand, one could theoretically adapt \texttt{Med-dit} to other best-arm algorithms
to find the medoid with $O(n\log\log n)$
distance computations at the sacrifice
of a large constant \citep{karnin2013almost}. Refer to Appendix \ref{sec:app_theory} for details.
\end{remark}

\begin{figure}[t]
	\centering
  \includegraphics[trim = {0 0 0 1.5cm}, clip, width=0.8\linewidth]{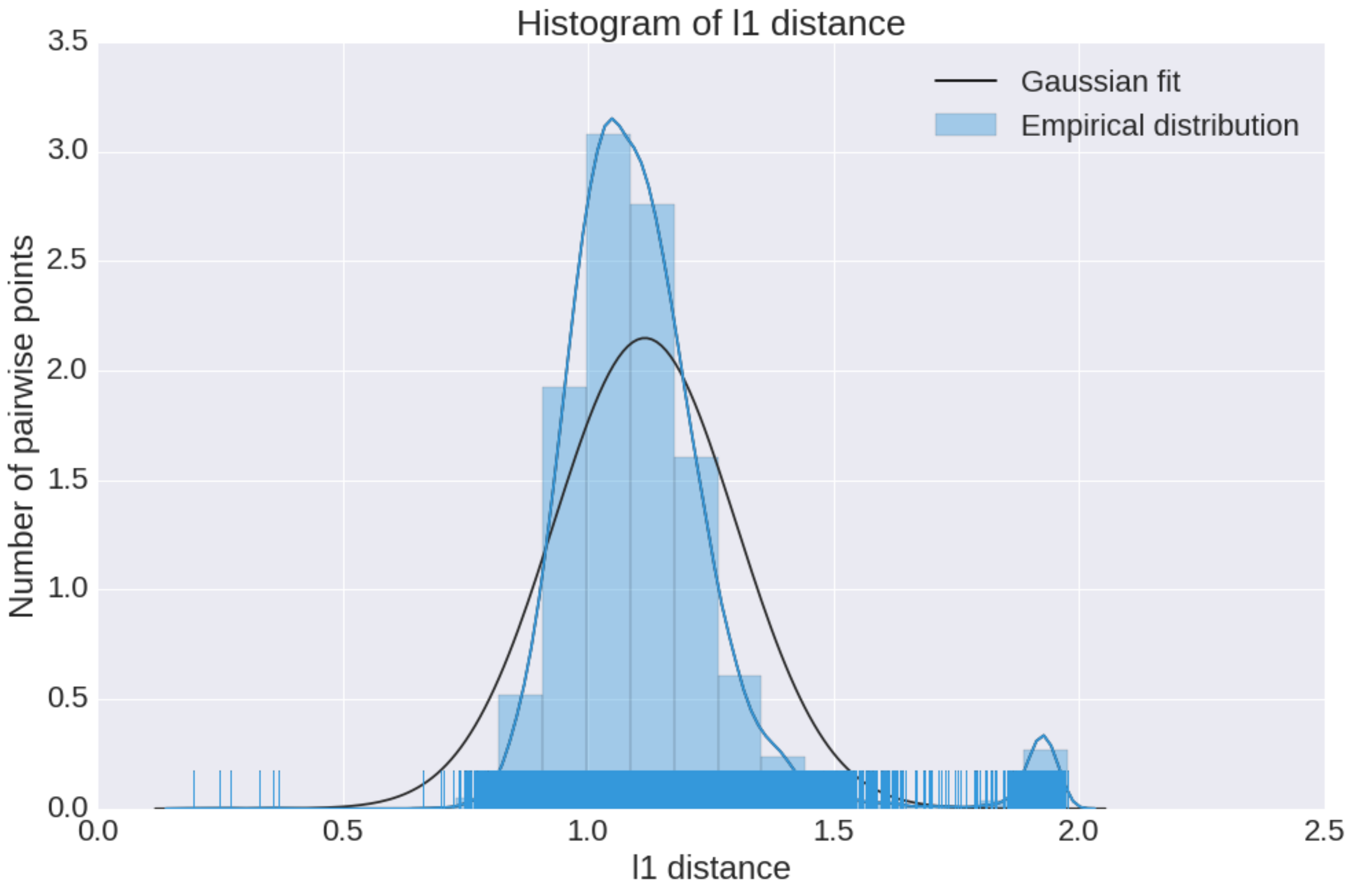}
\caption{ The distribution of mean distances of all the points in the
$20,000$ cell RNA-Seq dataset.\label{fig:distance_distribution}}
\end{figure}



\begin{proof}
Let $M$ be the total number of 
distance evaluations
when the algorithm 
stops. As defined above, let $T_i(t)$ be the number
of distance evaluations of point $i$ upto time $t$.

We first assume that 
$[\hat{\mu}_i(t)-C_i(t), \hat{\mu}_i(t)-C_i(t)]$
 are true 
$\left(1-\frac{2}{n^3}\right)$-confidence
interval (Recall that $\delta = \frac{2}{n^3}$) 
and show the result. Then we prove this statement.

Let $i^*$ be the true medoid.
We note 
that if we choose to update 
arm $i\neq i^*$ at time $t$, then we have 
\begin{align*}
\hat{\mu}_i(t) - C_i(t) \leq \hat{\mu}_{i^*}(t) - C_{i^*}(t).  
\end{align*}

For this to occur, at least one of the following three events must occur: 
\begin{align*}
& \mathcal{E}_1 = \left\{ \hat{\mu}_{i^*}(t) \geq \mu_{i^*}(t)+C_{i^*}(t) \right\},\\
& \mathcal{E}_2 = \left\{ \hat{\mu}_i(t) \leq \mu_i(t)-C_i(t) \right\}, \\
& \mathcal{E}_3 = \left\{ \Delta_i = \mu_i - \mu_{i^*} \leq 2 C_i(t) \right\}.
\end{align*}

To see this, note that if none of 
$\mathcal{E}_1, \mathcal{E}_2, \mathcal{E}_3$
occur,
we have 
\begin{align*}
\hat{\mu}_i(t) - C_i(t) \overset{(a)}{>} \mu_i - 2 C_i(t)\overset{(b)}{>}\mu_1 \overset{(c)}{>} \hat{\mu}_1 - C_1(t),
\end{align*}
where $(a)$, $(b)$, and $(c)$  follow because $\mathcal{E}_2$, 
$\mathcal{E}_3$, and $\mathcal{E}_1$ do not hold
respectively.

We note that as we compute 
$\left(1- \frac{2}{n^3}\right)-$confidence intervals 
at most $n$ times for each point. Thus we have 
at most $n^2$ computations of
$\left(1- \frac{2}{n^3}\right)-$confidence intervals
in total.

Thus $\mathcal{E}_1$ and $\mathcal{E}_2$
do not occur during any iteration 
with probability $1-\frac{2}{n}$,
because
\begin{align} \label{eq:sg_bound}
\text{w.p.}~(1-\frac{2}{n}): \vert \mu_i - \hat{\mu}_i(t) \vert \leq C_i(t),~\forall~i\in[n],~\forall~t.
\end{align}

This also implies that 
with
probability $1- \Theta\left(\frac{1}{n}\right)$
the 
algorithm does not stop unless 
the event $\mathcal{E}_3$,
a deterministic condition stops occurring. 

Let $\zeta_i$ be the 
iteration of the algorithm when it
evaluates a distance to point $i$ for 
the last time.
From the previous discussion, we 
have that the algorithm stops evaluating 
distances to
points $i$ when the following holds.
\begin{align*}
& C_i(\zeta_i) \le \frac{\Delta_i}{2}
\implies  \frac{\Delta_i}{2} \ge  \sqrt{\frac{2 \sigma^2 \log n^3}{T_i(\zeta_i)}}  \text{ or } C_i(\zeta_i) = 0, \\
&\implies T_i(\zeta_i)  \ge \frac{24 \sigma^2}{\Delta_i^2} \log n \text{ or }  T_i(\zeta_i) \ge 2n.
\end{align*}

Thus with probability $(1- o(1))$, the algorithm 
returns $i^*$ as the medoid with at most $M$ distance
evaluations, where
\begin{align*}
M \le \sum_{i \in [n]} T_i(\zeta_i)
\le \sum_{i \in [n]} \left( \frac{24 \sigma^2}{\Delta_i^2} \log n  \wedge 2n \right).
\end{align*}

To complete the proof, we next show that 
$[\hat{\mu}_i(t)-C_i(t), \hat{\mu}_i(t)-C_i(t)]$
 are true 
$\left(1-\frac{2}{n^3}\right)$-confidence
intervals of $\mu_i$.
We observe that if point $i$ is picked 
by \texttt{Med-dit} less than $n$ times at
time t, then $T_i(t)$ is equal to the number 
of times the point is picked. Further
$C_i(t)$ is the true $(1-\delta)$-confidence interval from 
Eq \eqref{eq:CI}.

However, if point $i$ is picked for the
$n$-th time at iteration $t$ (
line $7$ of Algorithm \ref{alg:UCB-Medoid})
then the empirical mean is computed
by evaluating all
$n-1$ distances (many distances again). 
Hence $T_i(t) = 2(n-1)$. As we know the 
mean distance of point $i$
exactly, $C_i(t) = 0$ is
still the true confidence interval. 

We remark that if a point $i$ is picked to be updated
the $(n+1)$-th time, as 
$C_i(t) =0$,  we have that  $\forall j \ne i$,
$$\hat{\mu}_i(t) + C_i(t) =\hat{\mu}_i(t) - C_i(t) < \hat{\mu}_j(t) - C_j(t).$$
This gives us that the stopping criterion is
satisfied and point $i$ is declared as the medoid.

\end{proof}

\vspace{-0.1 in}

\section{EMPIRICAL RESULTS}\label{sec:empi}
We empirically evaluate the performance
of \texttt{Med-dit} on two real world large-scale
high-dimensional  datasets:  
the
Netflix-prize dataset by \citep{netflixprize},
and 10x single cell RNA-Seq 
dataset \citep{10xdata}.

We picked these large datasets
because they 
have  sparse 
high-dimensional
feature vectors, 
and are at the throes of active efforts
to cluster using 
non-Euclidean distances. In 
both these datasets, we use
$1000$ randomly chosen points
to estimate the sub-Gaussianity 
parameter by cross-validation. 
We set $\delta = 1e$-$3$.

Running \texttt{trimed} and \texttt{TOPRANK} was computationally 
prohibitive due to the large size and high dimensionality ($\sim20,000$) of the above two datasets.
Therefore, we compare \texttt{Med-dit}  only to \texttt{RAND}.
We also ran \texttt{Med-dit} on three real world datasets  tested in \cite{NewFle} to compare it to \texttt{trimed} and \texttt{TOPRANK}.

\subsection{Single Cell RNA-Seq} 

Rapid advances in sequencing technology 
has enabled us to sequence 
DNA/RNA at a cell-by-cell basis and the single cell RNA-Seq
datasets can contain up to a million cells \citep{10xdata, 
indrop,
dropseq,10xold}. 

This technology involves 
sequencing an ensemble 
of single cells 
from a tissue, 
and obtaining the gene expressions
corresponding to each cell.
These
gene expressions are 
subsequently used to cluster
these cells in order 
to discover subclasses of cells
which would have been
hard to physically isolate
and study separately.
This technology 
therefore allows 
biologists to infer the diversity in a given tissue. 

We note that tens of thousands gene expressions
are measured in each cell, and
million of cells are sequenced in each
dataset.  Therefore
single cell RNA-Seq  has presented us 
with a very important large-scale
clustering problem with high dimensional data
\citep{10xold}. 
Moreover,
the distance metric of interest
here would be $\ell_1$ distance
as 
the features of each point 
are probability distribution,
Euclidean (or squared euclidean)
distances do not
capture the subtleties as
discussed in
\citep{batu2000testing, NtrKamZhaPacTse}. Further,
 very few genes are
expressed by most cells and the 
data is thus sparse.

\subsubsection{10xGenomics Mouse Dataset}\label{sec:10xdata}

We test the performance of \texttt{Med-dit} on
the single cell RNA-Seq dataset of \citep{10xdata}.
This dataset from 10xGenomics consists of 
$27,998$ genes-expression of $1.3$ million neurons cells 
from the cortex, hippocampus, and subventricular 
zone of a mice brain. These are clustered into 
$60$ clusters. Around $93\%$ of the gene expressions
in this dataset are zero.

We test \texttt{Med-dit} on 
subsets of  this dataset of two sizes:
\begin{itemize}
\item small dataset - $20,000$ cells (randomly chosen):  
We use this as we can compute the true medoid on this dataset
by brute force
and thus can compare the performance 
of \texttt{Med-dit} and \texttt{RAND}.
\item large dataset - $109,140$ cells (cluster 1) is the 
largest cluster in this dataset. We use the most commonly 
returned point as the true medoid for comparision. 
We note that this point is the same for both 
\texttt{Med-dit} and \texttt{RAND}, and has the smallest
distance among the top $100$ points returned in $1000$ trials
of both.
\end{itemize}

\begin{figure}[t]
\centering
  \includegraphics[width=.6\textwidth]{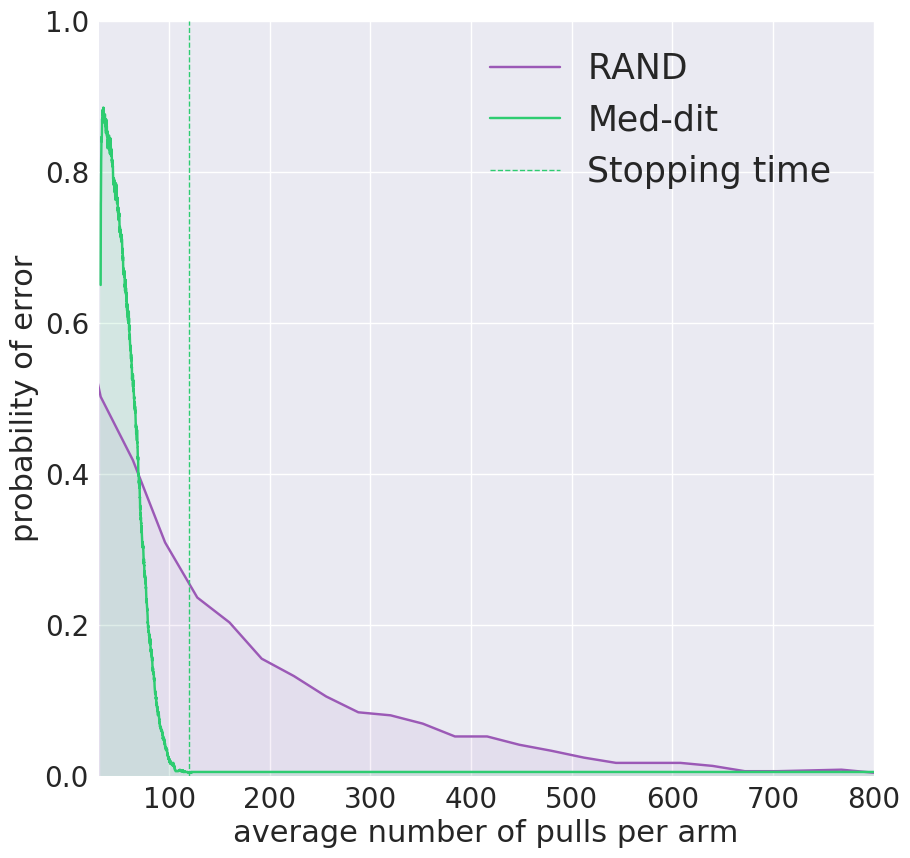}
\caption{We computed the true medoid of the data-set by brute force. The
$y$-axis shows  
 the probability that the point with the smallest estimated 
mean distance is \textit{not} the true medoid 
as a function of the number of pulls per arm.  
We note that  \texttt{Med-dit} has a stopping condition
while \texttt{RAND} does not. However we ignore the
stopping condition for \texttt{Med-dit} here.
\texttt{Med-dit} stops after $80$ distance 
evaluations per point without failing in any of the 
$1000$ trials, while 
\texttt{RAND} takes around $650$ distance evaluations to 
reach a $2\%$ probability of error.
}
\label{fig:scRNA_perf_20k}
\end{figure}

\textbf{Performance Evaluation : }
We compare the performance of \texttt{RAND} 
and \texttt{Med-dit} in 
Figures \ref{fig:performance_RNA_100k}
and \ref{fig:scRNA_perf_20k} on the large 
and the small dataset respectively.
We note that in both cases, 
 \texttt{RAND} needs 
$7$-$10$ times more distance 
evaluations to achieve $2\%$
error rate than what \texttt{Med-dit}
stops at (without an error in $1000$
trials). 

On the $20,000$ cell dataset, 
we note that \texttt{Med-dit}
stopped within $140$ distance evaluations
per arm in each of the $1000$ trials (with
$80$ distance evaluations per arm on average)
and never returned the wrong answer. 
\texttt{RAND} needs around $700$ distance
evaluations per point to obtain $2\%$ error rate.

On the $109,140$ cell dataset, 
we note that \texttt{Med-dit}
stopped within $140$ distance evaluations
per arm in each of the $1000$ trials (with
$120$ distance evaluations per arm on average)
and never returned the wrong answer. 
\texttt{RAND} needs around $700$ distance
evaluations per point to obtain $2\%$ error rate.

\subsection{Recommendation Systems}

With the explosion of e-commerce,
recommending products to users 
has been an important
avenue. 
Research into this took off with Netflix
releasing the Netflix-prize 
dataset \citep{netflixprize}.

Usually in the datasets involved here, 
we have
either rating given by users to different
products (movies, books, etc), or the
items bought by different users. The
task at hand is to recommend products
to a user based on behaviour of
similar users. Thus a common approach 
is to cluster similar users 
and to use trends 
in a cluster to recommend
products.

We note that in such datasets,
we typically have the behaviour
of millions of users and tens of thousands
of items. Thus recommendation
systems present us with an important
large-scale
clustering problem in high dimensions
\citep{DarMarWalGho}.
Further,
most users  buy (or rate) very few items on 
the inventory
and the 
data is thus sparse.
Moreover, the number of items bought
(or rated) by
users vary significantly and hence
distance
metrics 
that take this into account,
like  cosine distance
and  
Jaccard distance,
are of interest while clustering
as discussed in
\citep[Chapter~9]{LesRajUll}. 

\subsubsection{Netflix-prize Dataset}

\begin{figure}[t]
\centering
\minipage{0.48\textwidth}
  \includegraphics[width=1\linewidth]{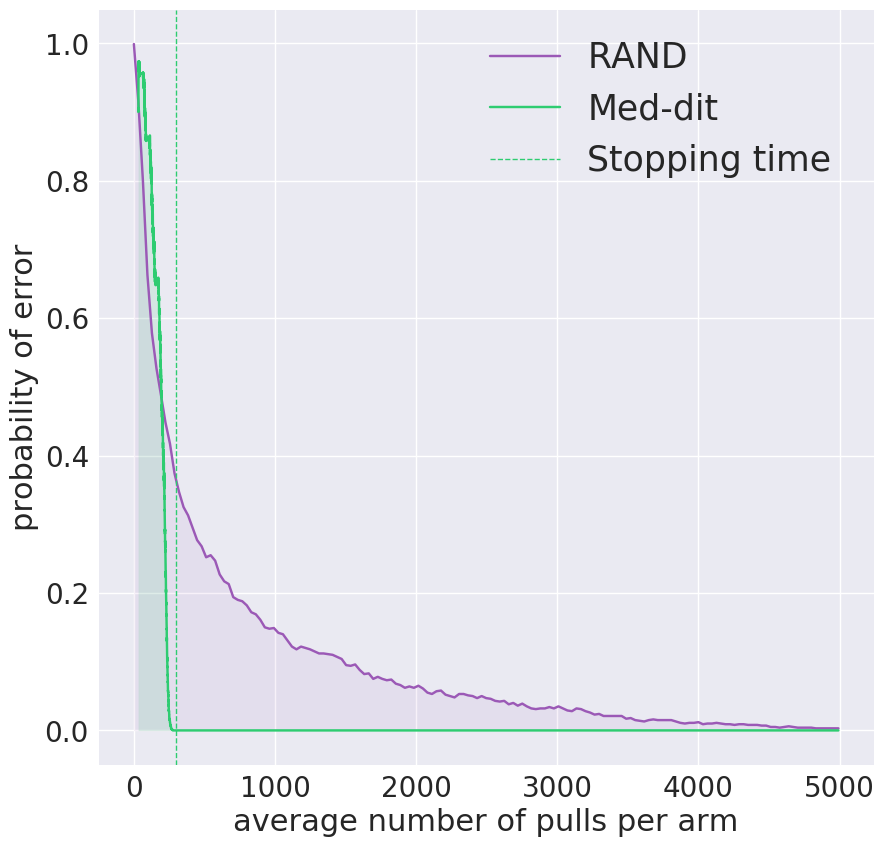}
  \endminipage
  \minipage{0.5\textwidth}
  \includegraphics[width=1\linewidth]{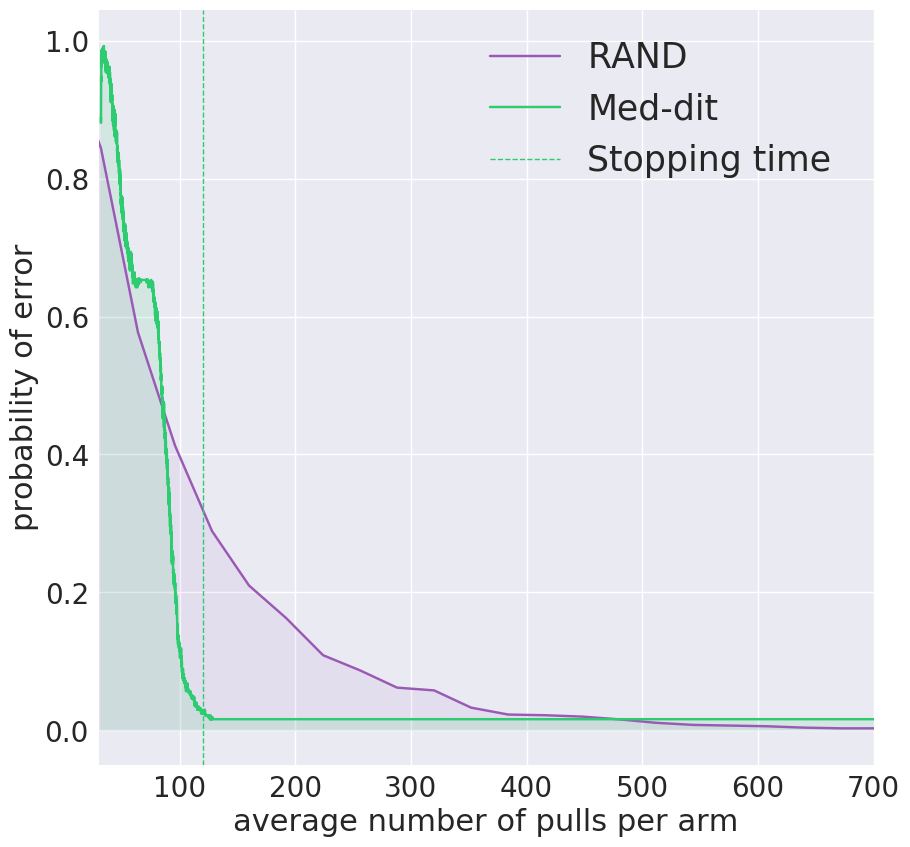}
  \endminipage
\caption{\textbf{Left}: the $20,000$ user Netflix-prize dataset. We computed the true medoid of the dataset by brute force. The
$y$-axis shows  
 the probability that the point with the smallest estimated 
mean distance is \textit{not} the true medoid 
as a function of the number of pulls per arm.  
We note that  \texttt{Med-dit} has a stopping condition
while \texttt{RAND} does not. However we ignore the
stopping condition for \texttt{Med-dit} here.
\texttt{Med-dit} stops after $500$ distance 
evaluations per point without failing in any of the 
$1000$ trials, while 
\texttt{RAND} takes around $4500$ distance evaluations to 
reach a $2\%$ probability of error.
\textbf{Right}: the $100, 000$ user Netflix-prize dataset. The $y$-axis shows  
the probability that the point with the smallest estimated 
mean distance is \textit{not} the true medoid 
as a function of the number of pulls per arm.  
We note that  \texttt{Med-dit} has a stopping condition
while \texttt{RAND} does not. However we ignore the
stopping condition for \texttt{Med-dit} here.
\texttt{Med-dit} stops after $500$ distance 
evaluations per point with $1\%$ probability of error, while 
\texttt{RAND} takes around $500$ distance evaluations to 
reach a $2\%$ probability of error. The computation of the true medoid here is  computationally prohibitive
and for comparision we use the most commonly returned point as the true medoid (which was same for both \texttt{Med-dit} and \texttt{RAND}) \label{fig:netflix_perf}}
\end{figure}

We test the performance of \texttt{Med-dit} on
the Netflix-prize dataset of \citep{netflixprize}.
This dataset from Netflix consists of 
ratings of 
$17,769$
movies by $480,000$
Netflix users.
Only $0.21\%$ of the 
entries in the matrix are non-zero.
As discussed in \citep[Chapter~9]{LesRajUll},
cosine distance is a popular 
metric considered here.

As in Section \ref{sec:10xdata}, we
use a small and a large subset 
of the dataset of size $20,000$ and
a $100,000$ respectively 
picked at random. For the former
we compute the true medoid by
brute force, while for the latter 
we use the most commonly 
returned point as the true medoid for comparision. 
We note that this point is the same for both 
\texttt{Med-dit} and \texttt{RAND}, and has the smallest
distance among the top $100$ points returned in $1000$ trials
of both.

\textbf{Performance Evaluation:}

On the $20,000$ user dataset (Figure \ref{fig:netflix_perf} left), 
we note that \texttt{Med-dit}
stopped within $600$ distance evaluations
per arm in each of the $1000$ trials (with
$500$ distance evaluations per arm on average)
and never returned the wrong answer. 
\texttt{RAND} needs around $4500$ distance
evaluations per point to obtain $2\%$ error rate.

On the $100, 000$ users on Netflix-Prize dataset (Figure \ref{fig:netflix_perf} right), we note that \texttt{Med-dit} stopped within 150 distance evaluations per arm in each of the 1000 trials (with 120 distance evaluations per arm on average) and returned the wrong answer only once. RAND needs around 500 distance evaluations per point to obtain $2\%$ error rate. 

\subsection{Performance on Previous Datasets}

We show the performance of 
\texttt{Med-dit} on data-sets ---  \texttt{Europe}  by \citep{UnbalanceSet} , 
 \texttt{Gnutella} by \citep{ripeanu2002mapping} ,  \texttt{MNIST} by \citep{lecun1998mnist}--- 
used in prior works in Table \ref{tab:perf}.
The number of distance evaluations
for \texttt{TOPRANK} and \texttt{trimed} 
are taken from \citep{NewFle}.
We note that the performance of \texttt{Med-dit}
on real world datasets which have low-dimensional feature vectors (like Europe)
is
worse than that of \texttt{trimed}, while it is better on
 datasets (like MNIST) with high-dimensional feature vectors. 
 
 We conjecture
 that this is due to the fact the distances in high 
 dimensions are average of simple functions (over a each dimension) and hence have Gaussian-like behaviour due to the Central Limit Theorem (and the delta method \citep{van1998asymptotic}). 
 
\begin{table}[t]
\centering
\begin{tabular}{||c|c|c|c|c||}
\hline
Dataset & $n, d$ &\texttt{TOPRANK}  & \texttt{trimed} & \texttt{med-dit}\\
\hline
Europe&$160k, 2$& $176k$  & $2862$ & $3514$\\
\hline
Gnutella &$6.3k$, NA& $7043$  & $6328$ & $83$\\
\hline
MNIST &$6.7k, 784$& $7472$  & $6514$& $91$\\
\hline
\end{tabular}
\caption{This table shows the performance on real-world 
data-sets picked from \cite{NewFle}. We note that performance
of \texttt{Med-dit} on  
 datasets, where the features
are in low dimensions is not as 
good as that of \texttt{trimed}, in high
dimensions \texttt{Med-dit} 
performs much better}.
\label{tab:perf}
\end{table}%

\subsection{Inner workings of \texttt{Med-dit}}

\textbf{Empirical running time:}
We computed the true mean distance $\mu_i$, and 
variance $\sigma$, for all the points in the small dataset 
by brute force.  Figure \ref{fig:distance_distribution} shows 
the histrogram of $\mu_i$'s. Using $\delta=1e\text{-}3$ in
Theorem \ref{thrm:ub}, the theoretical average number 
of distance evaluations is $266$. 
Empirically over $1000$ experiments, the average 
number of distance computations is $80$. This suggests 
that the theoretic analysis captures the 
essence of the algorithm.

\textbf{Progress of \texttt{Med-dit} : }
We see that at any iteration of 
 \texttt{Med-dit}, 
 only the points whose lower confidence
interval are smaller than the smallest upper confidence
interval among all points 
have their distances evaluated. At any iteration,
we say that such points are \textit{under 
consideration}. We note that a point 
that goes out of consideration can be
under consideration at a later iteration (this
happens if the smallest upper confidence
interval increases). We illustrate
the fraction of points under consideration 
at different iterations (indexed by the number
of distance evaluations per point)
in the left panel of Figure \ref{fig:progress_100k}. 
We pick $10$ snapshots of the algorithm
and illustrate the true distance distributions
of all points \textit{under consideration}
at that epoch
in the right panel of Figure 
\ref{fig:progress_100k}.
We note that both the 
mean and the 
standard deviation
of these distributions
decrease as the algorithm 
progresses.\footnote{ When the number of 
points \textit{under consideration} 
is more than $2000$, we 
draw the distribution using 
the true distances of $200$ 
random samples.}

\textbf{Confidence intervals at stopping time : }
\texttt{Med-dit} seems to compute 
the mean distance of a few points
exactly. These are usually points
with the mean distance close to 
the that of the medoid.  
The points whose mean distances 
are farther away from that 
of the medoid 
have fewer distances  evaluated
to have the confidence intervals 
rule them out as the medoid.
The $99.99\%$ confidence
interval of the top $150$ points
 in a run of the experiment
 are shown in Figure \ref{fig:CI_20k}
 for the $20,000$ cell RNA-Seq dataset.
 This allows the algorithm to save
 on distance computations 
 while returning the correct 
 answer.

\textbf{Speed-accuracy tradeoff}
Setting the value of $\delta$ between $0.01$ to $0.1$ will result in smaller confidence intervals 
around the estimates $\hat{\mu}_i$, which will ergo reduce the number of distance evaluations.
One could also run \texttt{Med-dit} for a fixed number of iterations and return the point with the smallest mean estimate. Both of these methods improve the running time at the cost of accuracy. Practically, the first method has a better trade-off between accuracy and running time whereas the second method has a deterministic stopping time.

\begin{figure*}
\minipage{0.48\textwidth}
  \includegraphics[width=1\linewidth]{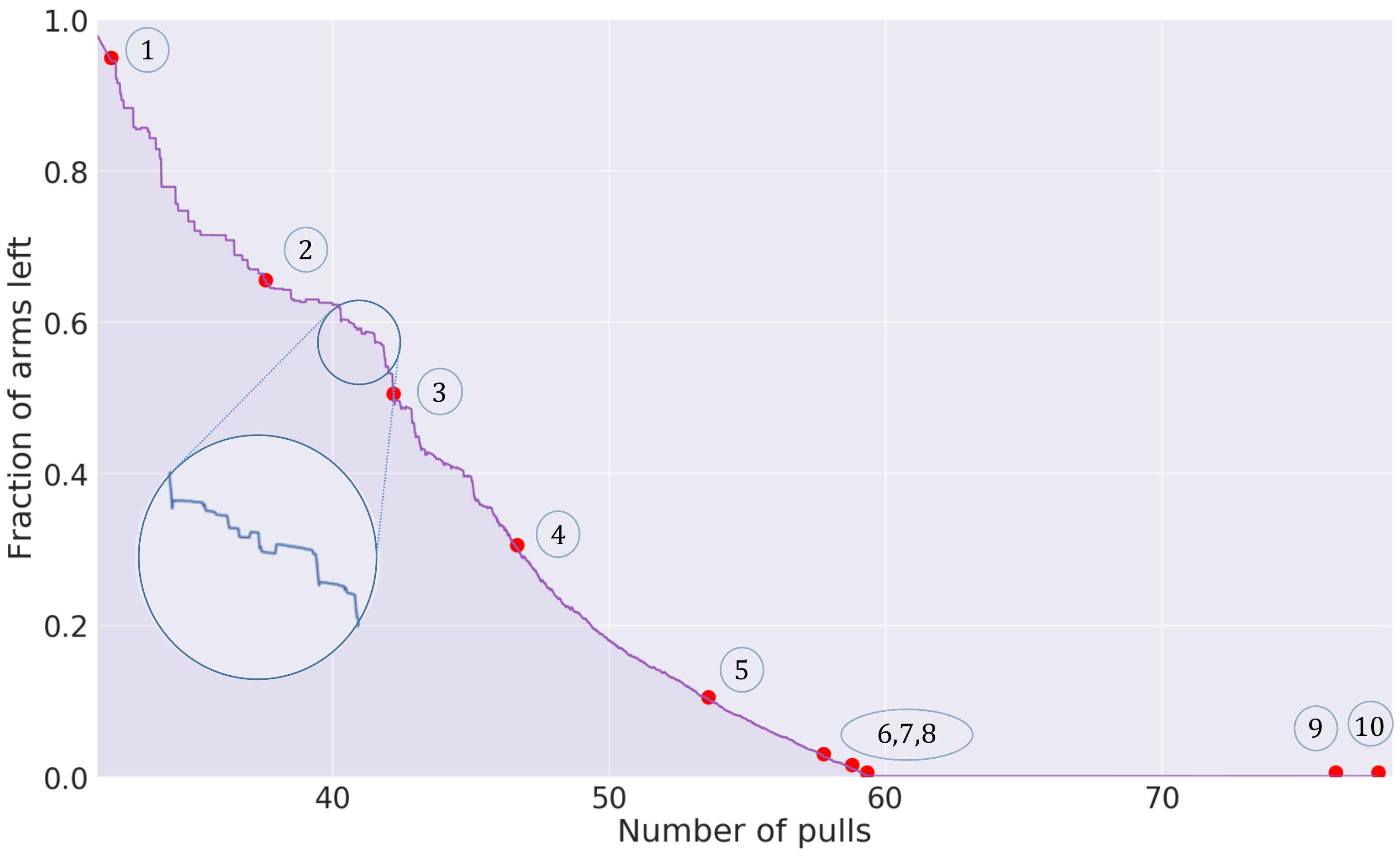}
  \endminipage
  \minipage{0.52\textwidth}
  \includegraphics[width=1\linewidth]{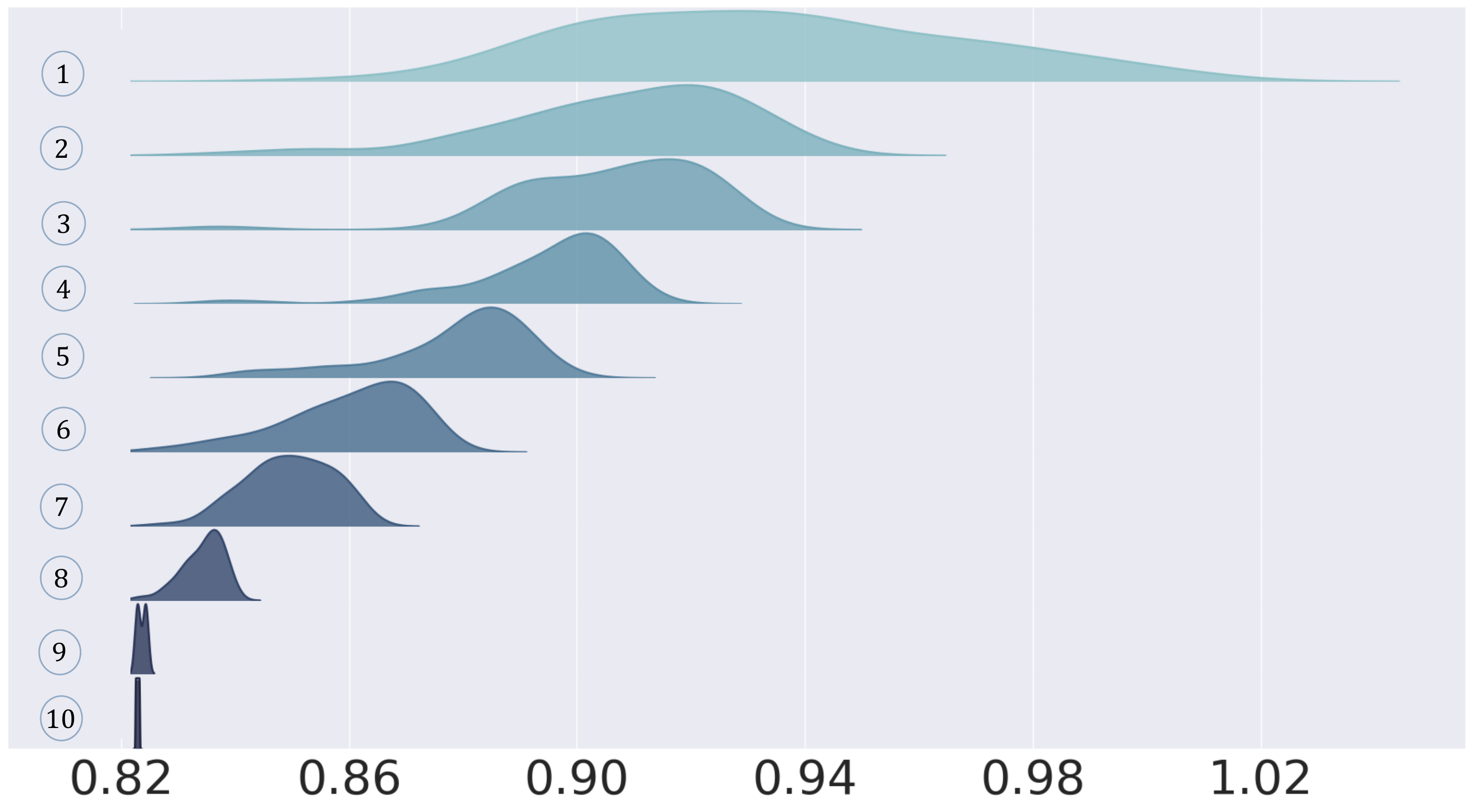}
  \endminipage
\caption{ \label{fig:progress_100k}
In the left panel, we show the number of points
\textit{under consideration} at different 
iterations in \texttt{Med-dit}.
We note that though the number 
of points \textit{under consideration}
show a decreasing trend, they do not decrease
monotonically. In the right panel,
 we show the 
  distribution of the true distances among the
  arms that are \textit{under consideration}
  at various snapshots of the algorithm. We note that
  both
  the mean and the variance
  of the distributions keeps decreasing. 
The last snapshot shows only the
distance of the point declared 
a medoid.
 }
\end{figure*}

\begin{figure}
\centering
 \includegraphics[clip,width=0.7\linewidth]{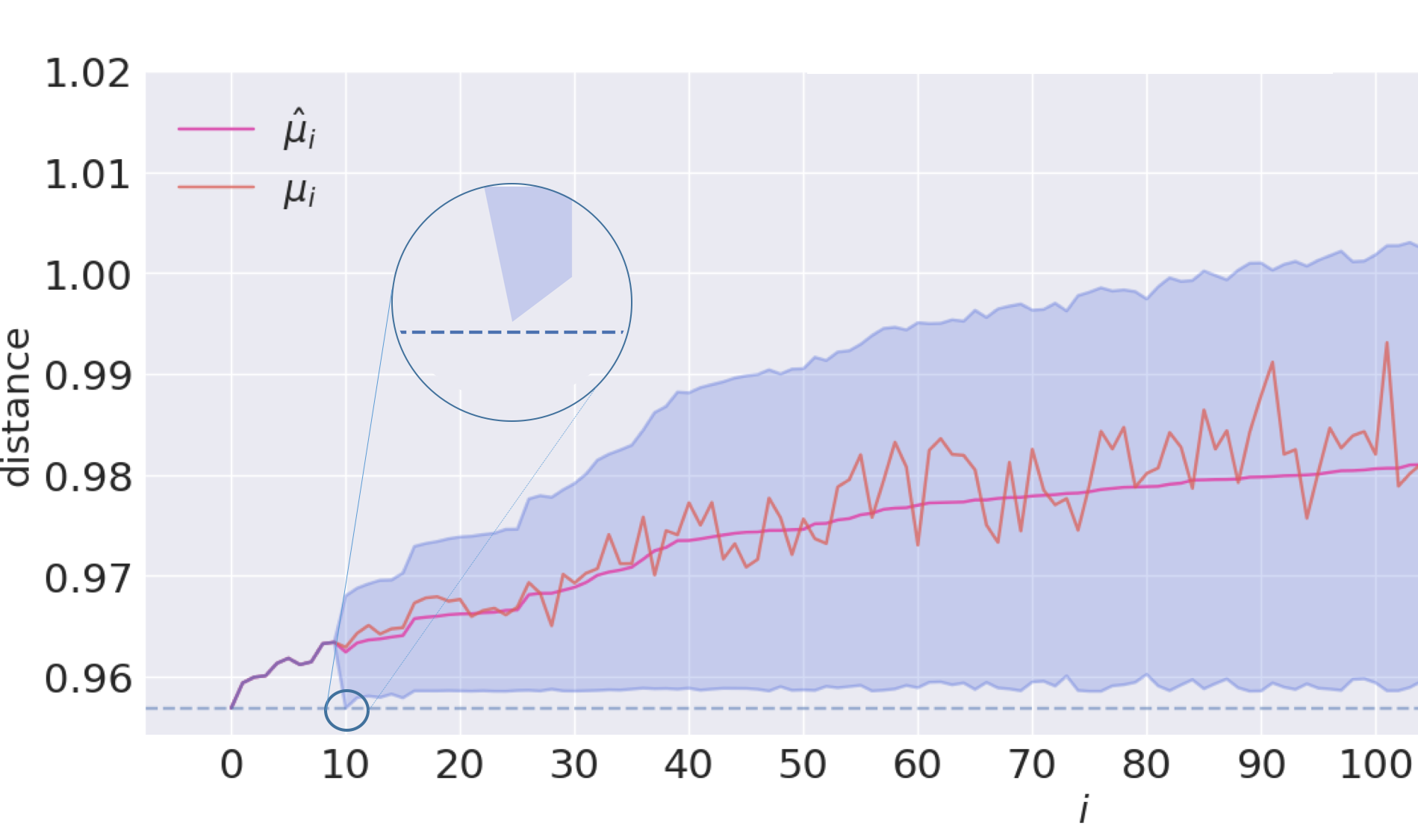}
  \caption{\label{fig:CI_20k} $99.99\%$-Confidence Intervals 
  of the $150$ points with smallest estimated mean distance in the $20,000$ 
  cell dataset at the termination of \texttt{Med-dit}. 
  The true means are shown by the orange line, while the
  estimated means are shown by the pink line. 
  We note that the distances of $9$ points 
  are computed to all other
  points. 
  We also note that mean of the medoid 
  is less than the $99.99\%$-lower
  confidence bound of all the other
  points.  
  }
\end{figure}

\let\OLDthebibliography\thebibliography
\renewcommand\thebibliography[1]{
  \OLDthebibliography{#1}
  \setlength{\parskip}{0pt}
  \setlength{\itemsep}{0pt plus 0.29ex}
}

\clearpage
\bibliographystyle{plain}
\bibliography{ref}
\clearpage

\newpage 
\clearpage
\onecolumn
\begin{center}
\textbf{\large Appendices}
\end{center}
\setcounter{section}{0}

\section{ The $O(n \log n)$ Distance Evaluations Under Gaussian Prior \label{sec:Gau_prior}}


We assume that the mean distances
of each point $\mu_i$
are i.i.d. samples of 
${N} (\gamma, 1)$. 
We note that this implies that
$\Delta_i, \ 1\le i \le n$ are 
$n$ iid random variables.
Let $\Delta$ be a random
variable with the same law
as $\Delta_i$.

From the concentration 
of the minimum of $n$ gaussians,
we have that 
$$\min_i \mu_i + \sqrt{2 \log n} \overset{p}{\rightarrow} \gamma. $$

This gives us that 
$$\Delta  - \sqrt{2 \log n} \overset{d}{\rightarrow}  \mathcal{N}( 0,1).$$

We note that by Eq \eqref{eq:ub}, we have that the
expected number of distance evaluations $M$ is of
the order of 
\begin{align*}
\mathbb{E}[M] &\le n\mathbb{E} \left[\frac{\log n}{\Delta^2} \wedge n \right],
\end{align*}
where the expection is taken with respect to the randomness of $\Delta$.

To show this, it is enough to show that, 
\begin{align*}
\mathbb{E} \left[\frac{\log n}{\Delta^2} \wedge n \right] \le  C \log n,
\end{align*}
for some constant $C$.

To compute that for this prior, we divide the real line into 
three intervals, namely
\begin{itemize}
\item $\left(-\infty, \sqrt{\frac{{\log n}}{n}} \right]$,
\item $\left(\sqrt{\frac{{\log n}}{n}}, c\sqrt{\log n} \right)$,
\item $\left[ c\sqrt{\log n}  ,\infty\right)$,
\end{itemize}
 and compute the expectation on these three 
ranges. We note that for if $\Delta \in 
(-\infty, \sqrt{\frac{{\log n}}{n}} ]$, 
while for $\Delta \in 
(\sqrt{\frac{{\log n}}{n}}, \infty ]$, 
$ \frac{\log n}{\Delta^2} \le n$. Thus we 
have that,
\begin{align*}
\mathbb{E} \left[\frac{\log n}{\Delta^2} \wedge n \right] &\le \overbrace{\mathbb{E} \left[n
\mathbb{I}\left(\Delta \leq \sqrt{\frac{{\log n}}{n}} \right)\right]}^{\mathbf{I}} + 
\overbrace{ \lim_{\delta,\epsilon \rightarrow 0}  \mathbb{E} \left[\frac{\log n}{\Delta^2} 
\mathbb{I}\left(\sqrt{\frac{\log n}{n^{(1-\epsilon)}}}  \le \Delta \le 
c\frac{\sqrt{\log n}}{n^{\delta}}  \right)\right]  }^{\mathbf{II}}  \\
&  \quad \quad  + \underbrace{\mathbb{E} \left[\frac{\log n}{\Delta^2}
\mathbb{I}\left( \Delta \ge 
c\sqrt{\log n} \right)\right]}_{\mathbf{III}},
\end{align*}
where we use the Bounded Convergence Theorem
to establish $\mathbf{II}$.

We next show that all three terms in the above equation
are of $O(\log n)$. We first show the easy cases of 
$\mathbf{I}$ and $\mathbf{III}$ and then proceed to
$\mathbf{III}$.

\begin{itemize}
\item To establish that $\mathbf{I}$ is $O(\log n)$, we start by defining,
$q_1 = P[\Delta < \frac{\sqrt{\log n}}{n}]$
\begin{align*}
\mathbb{E} \left[n
\mathbb{I}\left(\Delta \leq \frac{\sqrt{\log n}}{n} \right)\right]  &= nq_1.
\end{align*}
Further note that,
\begin{align*}
q_1 &\le \exp \left( - \frac{1}{2} \left( \sqrt{2\log n} - \sqrt{\frac{\log n }{n}} \right)^{2} \right),\\
&= \left(\frac{1}{n} \right)^{(1-\frac{1}{ \sqrt{2n}})^2}.
\end{align*}

Thus 
\begin{align*}
\frac{q_1 n}{ \log n} &\le \frac{n^{1 - (1-\frac{1}{ \sqrt{2n}})^2}}{\log n},\\
&\le \exp( (\sqrt{\frac{2}{n}}(1+o(1)) \log n - \log \log n),\\
&= o(1).
\end{align*}

\item To establish that $\mathbf{III}$ is $O(\log n)$, we note that,
\begin{align*}
\mathbb{E} \left[\frac{\log n}{\Delta^2} 
\mathbb{I}\left( \Delta \ge 
c\sqrt{\log n} \right)\right] &\le \frac{1}{c^2}  P(\Delta \ge c \sqrt{\log n}),\\
&\le  \frac{1}{c^2},\\
&= \Theta(1).
\end{align*}

\item Finally to establish that $\mathbf{II}$ is $O(\log n)$, we note that,
\begin{align*}
 \lim_{\delta,\epsilon \rightarrow 0} \mathbb{E} \left[\frac{\log n}{\Delta^2} 
\mathbb{I}\left(\sqrt{\frac{\log n}{n^{(1-\epsilon)}}}  \le \Delta \le 
c\frac{\sqrt{\log n}}{n^{\delta}}  \right)\right]  &\le  \lim_{\delta,\epsilon \rightarrow 0} n^{2-\epsilon} P\left(\Delta \le 
c\frac{\sqrt{\log n}}{n^{\delta}}\right),\\
&\le \lim_{\delta,\epsilon \rightarrow 0}  n^{1-\epsilon} \exp \left( - \frac{1}{2} \left( \sqrt{2\log n} - \sqrt{\log n}\frac{c}{n^{\delta}} \right)^{2} \right),\\
&= \lim_{\delta,\epsilon \rightarrow 0} n^{1-\epsilon} \left( \frac{1}{n} \right) ^{(1 - \frac{c}{\sqrt{2} n^{\delta}})^2},\\
& = \lim_{\delta,\epsilon \rightarrow 0}  n^{1 - \epsilon - (1 - \frac{c}{\sqrt{2} n^{\delta}})^2 },\\
&= \lim_{\delta,\epsilon \rightarrow 0} n^{- \epsilon + \frac{\sqrt{2}c}{n^{\delta}} - \frac{c^2}{2 n^{2 \delta}}}.
\end{align*}

Letting $\delta$ to go to $0$ faster than $\epsilon$, we see that,
\begin{align*}
 \lim_{\delta,\epsilon \rightarrow 0} \mathbb{E} \left[\frac{\log n}{\Delta^2} 
\mathbb{I}\left(\sqrt{\frac{\log n}{n^{(1-\epsilon)}}}  \le \Delta \le 
c\frac{\sqrt{\log n}}{n^{\delta}}  \right)\right]  &\le O(\log n).
\end{align*}

\end{itemize}

This gives us that under this model, we have 
that under this model,
\begin{align*}
\mathbb{E}[M] \le O(n \log n).
\end{align*}

\clearpage

\section{Extensions to the Theoretical Results}\label{sec:app_theory}
In this section we discuss two extensions to the theoretical results presented in the main article: 
\begin{enumerate}
\item  relax  the sub-Gaussian assumption
used
for the analysis of \texttt{Med-dit}.  
We see that assuming that the distances random 
variables have finite variances 
is enough for a variant of \texttt{Med-dit} to need 
essentially the same number of distance
evaluations as Theorem \ref{thrm:ub}. 
The variant would use
a different estimator (called Catoni's M-estimator \citep{catoni2012challenging}) for the 
mean distance of each point instead of 
empirical average leveraging the work of \cite{bubeck2013bandits}.  
This is based on 
 the work of \cite{bubeck2013bandits}.
\item note that there exists an algorithm which 
can compute the medoid
with $O(n \log \log n)$ distance evaluations. 
The algorithm called \texttt{exponential-gap}
algorithm \cite{karnin2013almost} and is discussed
below.
\end{enumerate}

%
 
\subsection{Weakening the Sub-Gaussian Assumption}
We note  that in order to have the $O(n\log n)$ sample complexity, \texttt{Med-dit} relies on a concentration bound where the tail probability decays exponentially. 
This is achieved by assuming that for each point $x_i$, the random variable of sampling with replacement from $\mathcal{D}_i$ is $\sigma$-sub-Gaussian in Theorem \ref{thrm:ub}.
As a result, we can have the sub-Gaussian tail bound that for any point $x_i$ at time $t$, with probability at least $1-\delta$, the empirical mean $\hat{\mu}_i$ satisfies
\begin{align*}
\vert \mu_i - \hat{\mu}_i\vert \leq \sqrt{\frac{2\sigma^2 \log \frac{2}{\delta}}{T_i(t)}} .
\end{align*} 
In fact, as pointed out by \cite{bubeck2013bandits}, to achieve the $O(n\log n)$ sample complexity, all we need is a performance guarantee like the one shown above for the empirical mean. 
To be more precise, we need the following property: 
\begin{assumption} \citep{bubeck2013bandits} \label{asmp:tail_bd} Let $\epsilon \in (0,1]$ be a positive parameter and let $c,v$ be positive constants.
Let $X_1,\cdots,X_T$ be i.i.d. random variables with finite mean $\mu$.
Suppose that for all $\delta \in (0,1)$, there exists an estimator $\hat{\mu}=\hat{\mu}(T,\delta)$ such that, with probability at least $1-\delta$,
\begin{align*}
\vert \mu - \hat{\mu} \vert \leq v^{\frac{1}{1+\epsilon}} \left( \frac{c \log \frac{2}{\delta}}{T}\right) ^{\frac{\epsilon}{1+\epsilon}}.
\end{align*}
\end{assumption}
\begin{remark}
If the distribution of $X_j$ satisfies $\sigma$-sub-Gaussian condition, then Assumption \ref{asmp:tail_bd} is satisfied for $\epsilon=1$, $c=2$, and variance factor $v=\sigma^2$.
\end{remark}

However, Assumption \ref{asmp:tail_bd} can be satisfied with conditions much weaker than the sub-Gaussian condition. 
One way is by substituing the empirical mean estimator by some refined mean estimator that gives the exponential tail bound. 
Specifically, as suggested by \cite{bubeck2013bandits}, we can use Catoni's M-estimator \citep{catoni2012challenging}.

Catoni's M-estimator is defined as follows: let $\psi: \mathbb{R} \to \mathbb{R}$ be a continuous strictly increasing function satisfying
\begin{align*}
-\log (1-x+\frac{x^2}{2}) \leq \psi (x) \leq \log (1+x+\frac{x^2}{2}).
\end{align*}
Let $\delta \in (0,1)$ be such that $T > 2 \log (\frac{1}{\delta})$ and introduce 
\begin{align*}
\alpha_\delta = \sqrt{\frac{2 \log \frac{1}{\delta}}{T(\sigma^2 + \frac{2 \sigma^2 \log \frac{1}{\delta}}{T-2\log \frac{1}{\delta}})}}.
\end{align*}
If $X_1,\cdots,X_T$ are i.i.d. random variables, the Catoni's estimator is defined as the unique value $\hat{\mu}_C = \hat{\mu}_C (T,\delta)$ such that 
\begin{align*}
\sum_{i=1}^n \psi(\alpha_\delta (X_i - \hat{\mu}_C)) =0.
\end{align*}
Catoni \citep{catoni2012challenging} proved that if $T \geq 4 \log \frac{1}{\delta}$ and the $X_j$ have mean $\mu$ and variance at most $\sigma^2$, then with probability at least $1-\delta$, 
\begin{align}\label{eq:ci_catoni}
\vert \hat{\mu}_C - \mu \vert \leq 2 \sqrt{\frac{\sigma^2 \log \frac{2}{\delta}}{T}}.
\end{align}
The corresponding modification to \texttt{Med-dit} is as follows.
\begin{enumerate} 
\item For the initialization step, sample each point $4 \log \frac{1}{\delta}$ times to meet the condition for the concentration bound of the Catoni's M-estimator. 
\item For each arm $i$, if $T_i(t) < n$, maintain the $1-\delta$ confidence interval $[\hat{\mu}_{C,i}-C_i(t), \hat{\mu}_{C,i}+C_i(t)]$, where  $\hat{\mu}_{C,i}$ is the Catoni's estimator of $\mu_i$, and 
\begin{align*}
C_i(t) = 2 \sqrt{\frac{\sigma^2 \log \frac{2}{\delta}}{T_i(t)}}.
\end{align*}
\end{enumerate}

\begin{proposition}
For $i\in[n]$, let $\Delta_i = \mu_i - \mu^*$. 
If we pick $\delta = \frac{1}{n^3}$ in the above algorithm,
then with probability  $1-o(1)$, 
it returns the true medoid with the with number of 
distance evaluations $M$ such that,
\begin{align*}
M \leq  12 n \log n + \sum_{i \in [n]} \left( \frac{48 \sigma^2}{\Delta_i^2} \log n  \wedge 2n \right)
\end{align*}
\end{proposition}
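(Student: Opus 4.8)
The plan is to follow the proof of Theorem~\ref{thrm:ub} almost verbatim, replacing the sub-Gaussian tail bound for the empirical mean by the concentration guarantee~\eqref{eq:ci_catoni} for Catoni's M-estimator and paying separately for the enlarged initialization. The only structural subtlety is that Catoni's bound~\eqref{eq:ci_catoni} is valid only after a point has been sampled at least $4\log\tfrac1\delta$ times; with $\delta=\tfrac1{n^3}$ this is $12\log n$ samples, which is exactly why the modified algorithm pre-samples every point $4\log\tfrac1\delta$ times. First I would account for this: the initialization costs $12 n\log n$ distance evaluations (the first summand in the claimed bound), and, crucially, it guarantees that at \emph{every} subsequent iteration $t$ and every point $i$ with $T_i(t)<n$ we have $T_i(t)\ge 4\log\tfrac1\delta$, so that $[\hat\mu_{C,i}(t)-C_i(t),\ \hat\mu_{C,i}(t)+C_i(t)]$ with $C_i(t)=2\sqrt{\sigma^2\log\tfrac2\delta/T_i(t)}$ is a genuine $(1-\delta)$-confidence interval for $\mu_i$ by~\eqref{eq:ci_catoni}; when $T_i(t)\ge n$ the exact mean is computed and $C_i(t)=0$ is trivially a valid confidence interval.

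Next I would run the union bound exactly as in Theorem~\ref{thrm:ub}. Each point has its interval recomputed at most $n$ times, hence at most $n^2$ intervals are ever formed, each failing with probability at most $\delta=\tfrac1{n^3}$; so with probability $1-\Theta(\tfrac1n)=1-o(1)$ all of them are simultaneously valid, i.e.\ $|\mu_i-\hat\mu_{C,i}(t)|\le C_i(t)$ for all $i$ and all $t$, which is the analogue of~\eqref{eq:sg_bound}. I would then condition on this good event; everything after is deterministic. Define $\mathcal{E}_1,\mathcal{E}_2,\mathcal{E}_3$ exactly as in the proof of Theorem~\ref{thrm:ub}, with $\hat\mu_{C,i}$ in place of $\hat\mu_i$: if arm $i\neq i^*$ is pulled at iteration $t$ then $\hat\mu_{C,i}(t)-C_i(t)\le\hat\mu_{C,i^*}(t)-C_{i^*}(t)$, and the same three-line inequality chain forces at least one of $\mathcal{E}_1,\mathcal{E}_2,\mathcal{E}_3$ to hold. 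On the good event $\mathcal{E}_1$ and $\mathcal{E}_2$ never occur, so arm $i\neq i^*$ is pulled only while the deterministic condition $\mathcal{E}_3=\{\Delta_i\le 2C_i(t)\}$ holds; and if the current minimizer has had all $n-1$ distances computed (so $C_i=0$), the stopping rule $\hat\mu_{i^*}(t)+C_{i^*}(t)<\hat\mu_i(t)-C_i(t)$ for all $i\neq i^*$ is met and $i^*$ is returned — exactly as argued at the end of the original proof.

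The only computation that changes is solving $2C_i(t)\le\Delta_i$ for $T_i(t)$ with the new constant inside $C_i$: this reads $\Delta_i^2\ge 4C_i(t)^2=16\sigma^2\log\tfrac2\delta/T_i(t)$, so the last pull to $i$ occurs at some $\zeta_i$ with $T_i(\zeta_i)\le 16\sigma^2\log(2n^3)/\Delta_i^2$, which is $48\sigma^2\log n/\Delta_i^2$ up to the lower-order $\log 2$ term, or else $T_i$ has already reached $n$, whence $n-1$ further exact evaluations make $T_i(\zeta_i)\le 2(n-1)\le 2n$. Summing the per-arm main-loop bounds and adding the $12n\log n$ initialization cost yields $M\le 12n\log n+\sum_{i\in[n]}\bigl(48\sigma^2\Delta_i^{-2}\log n\wedge 2n\bigr)$ on an event of probability $1-o(1)$. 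The main obstacle is not any deep argument — it is the bookkeeping around Catoni's precondition $T\ge 4\log\tfrac1\delta$ (resolved by the enlarged initialization) and the constant-chasing from $16\log(2n^3)$ to $48\log n$ (absorbing lower-order terms). A cleaner route, which I would mention, is to verify Assumption~\ref{asmp:tail_bd} with $\epsilon=1$, $c=2$, $v=\sigma^2$ and then invoke the generic analysis of \cite{bubeck2013bandits} directly, after which the proposition is essentially immediate.
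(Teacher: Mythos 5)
Your proposal is correct and follows essentially the same route as the paper: the paper's proof simply says to rerun the argument of Theorem~\ref{thrm:ub} with Catoni's confidence interval in place of the sub-Gaussian one, charge $12n\log n$ for the enlarged initialization, and redo the constant chase $4C_i(t)^2=16\sigma^2\log\tfrac{2}{\delta}/T_i(t)\le\Delta_i^2$ to get $48\sigma^2\Delta_i^{-2}\log n$. Your additional bookkeeping (the union bound over at most $n^2$ intervals, the observation that initialization guarantees $T_i(t)\ge 4\log\tfrac1\delta$ so that \eqref{eq:ci_catoni} applies at every iteration, and the $T_i\le 2n$ case) just makes explicit what the paper leaves implicit.
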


\begin{proof}
Let $\delta = \frac{2}{n^3}$.
The initialization step takes an extra $12 n \log n$ distance computations. 
Following the same proof as Theorem \ref{thrm:ub}, we can show that the modified algorithm returns the true medoid with probability at least $1-\Theta(\frac{1}{n})$, and apart from the initialization, the total number of distance computations can be upper bounded by 
\begin{align*}
\sum_{i \in [n]} \left( \frac{48 \sigma^2}{\Delta_i^2} \log n  \wedge 2n \right).
\end{align*}
So the total number of distance computations can be upper bounded by 
\begin{align*}
M \leq  12 n \log n + \sum_{i \in [n]} \left( \frac{48 \sigma^2}{\Delta_i^2} \log n  \wedge 2n \right).
\end{align*}
\end{proof}

\begin{remark}
By using the Catoni's estimator, instead of 
the empirical mean, we need a much weaker
assumption that the distance evaluations have finite variance
to get the same results as Theorem \ref{thrm:ub}. 
\end{remark}

\subsection{On the $O(n\log \log n)$ Algorithm}
The best-arm algrithm \texttt{exponential-gap} \citep{karnin2013almost} can be directly applied on the medoid problem, which takes $O(\sum_{i \neq i^*} \Delta_i^{-2} \log \log \Delta_i^{-2})$ distance evaluations, essentially $O(n\log \log n)$ if $\Delta_i$ are constants. 
It is an variation of the familiy of action elimination algorithm for the best-arm problem. 
A typical action elimination algorithm proceeds as follows: Maintaining a set $\Omega_k$ for $k=1,2,\cdots$, initialized as $\Omega_1=[n]$.
Then it proceeds in epoches by sampling the arms in $\Omega_k$ a predetermined number of times $r_k$, and maintains arms according to the rule:
\begin{align*}
\Omega_{k+1} = \{ i \in \Omega_k: \hat{\mu}_{a} + C_{a}(t) < \hat{\mu}_{i} - C_{i}(t)\},
\end{align*}
where $a \in \Omega_k$ is a reference arm, e.g. the arm with the smallest  $\hat{\mu}_{i} + C_{i}(t)$. 
Then the algorithm terminates when $\Omega_k$ contains only one element. 

The above vanilla version of the action elimination algorithm takes $O(n \log n)$ distance evaluations, same as \texttt{Med-dit}. 
The improvement by \texttt{exponential-gap} is by observing that the suboptimal $\log n$ factor is due to the large deviations of $\vert \hat{\mu}_{a}-\mu_a\vert $ with $a=\arg\min_{i\in\Omega_k} \hat{\mu}_i$. 
Instead, \texttt{exponential-gap} use a subroutine \texttt{median elimination} \citep{even2006action} to determine an alternative reference arm $a$ with smaller deviations and allows for the removal of the $\log n$ term,  
where \texttt{median elimination} takes $O(\frac{n}{\epsilon^2} \log \frac{1}{\delta})$ distance evaluations to return a $\epsilon$-optimal arm. 
However, this will introduce a prohibitively large constant due to the use of \texttt{median elimination}.
Regarding the technical details, we note both paper \citep{karnin2013almost,even2006action} assume the boundedness of the random variables for their proof, which is only used to have the hoefflding concentration bound. 
Therefore, with our sub-Gaussian assumption, the proof will follow symbol by symbol, line by line.

\end{document}